\newtheorem{theorem}{Theorem}
\newtheorem{example}{Example}
\newtheorem{lemma}{Lemma}
\newtheorem{definition}{Definition}
\DeclareMathOperator*{\argmin}{arg\,min}
\newcommand\eqdef{\mathrel{\overset{\makebox[0pt]{\mbox{\normalfont\tiny\sffamily def}}}{=}}}
\DeclareMathOperator{\sign}{sign}
\begin{document}

%

%

\twocolumn[

\aistatstitle{Learning Hierarchical Interactions at Scale: A Convex Optimization Approach}

\aistatsauthor{Hussein Hazimeh \And Rahul Mazumder}
\aistatsaddress{ Massachusetts Institute of Technology \And Massachusetts Institute of Technology}
]

\begin{abstract}
In many learning settings, it is beneficial to augment the main features with pairwise interactions. Such interaction models can be often enhanced by performing variable selection under the so-called \textsl{strong hierarchy}  constraint: an interaction is non-zero only if its associated main features are non-zero. Existing convex optimization-based algorithms face difficulties in handling problems where the number of main features $p \sim 10^3$ (with total number of features $\sim p^2$). In this paper, we study a convex relaxation which enforces strong hierarchy and develop a highly scalable algorithm based on proximal gradient descent. We introduce novel screening rules that allow for solving the complicated proximal problem in parallel. In addition, we introduce a specialized active-set strategy with gradient screening for avoiding costly gradient computations. The framework can handle problems having dense design matrices, with $p = 50,000$ ($\sim 10^9$ interactions)---instances that are much larger than state of the art. Experiments on real and synthetic data suggest that our toolkit \textsl{hierScale} outperforms the state of the art in terms of prediction and variable selection and can achieve over a 4900x speed-up.
\end{abstract}

\section{Introduction}
In many machine learning applications, augmenting main effects with pairwise interactions can lead to better statistical models. Given a response vector $y \in \mathbb{R}^n$ and data matrix $X = [X_1, X_2, \dots, X_p] \in \mathbb{R}^{n \times p}$, we consider a linear model of the form:
\begin{align} \label{eq:mainmodel}
y = \beta_0 + \sum\nolimits_i X_i \beta_i + \sum\nolimits_{i < j} \theta_{ij} (X_i * X_j) + \epsilon,
\end{align}
where $*$ denotes element-wise multiplication and $\epsilon$ is a noise vector. Above, $\beta_0$ is the intercept, $\sum_i X_i \beta_i$ corresponds to the main effects and
$\sum_{i < j} \theta_{ij} (X_{i} * X_{j})$ denotes the interaction effects.
The goal here is to learn the coefficients $\beta,\theta$. For small $n$, this quickly leads to an ill-posed problem as the total number of coefficients is $1 + p + {p \choose 2}$, which can far exceed $n$. Thus, imposing sparsity can be beneficial from both the statistical and computational viewpoints. While vanilla sparsity-inducing regularization methods (e.g., using the $\ell_0$ or $\ell_1$ norm) can help, structured sparsity can be much more effective in this setting. Particularly, we consider enforcing sparsity under the \textsl{strong hierarchy} (SH) constraint \citep{mccullagh1989generalized, bien2013lasso}, which states that an interaction term should be non-zero only if its corresponding main effect terms are both non-zero. SH can be expressed via the following combinatorial statement:
$$\text{{\bf{Strong Hierarchy}} (SH):}~~\theta_{ij} \neq 0 {\implies} \beta_i \neq 0 \ \land \ \beta_j \neq 0$$
SH is a natural property that is widely used in high-dimensional statistics: it leads to more interpretable models with good predictive performance~\citep{cox1984interaction, mccullagh1989generalized, bien2013lasso}. 
{Moreover, SH promotes \textsl{practical sparsity}, i.e., it reduces the number of main features that need to be measured when making new predictions---this can significantly save on future data collection costs \citep{bien2013lasso}.} 

An impressive line of work for learning sparse interactions under SH is based on a regularization framework (see~\cite{choi2010variable, radchenko2010variable, bien2013lasso, lim2015learning, yan2017hierarchical, she2018group} and the references therein). These methods minimize the empirical risk with sparsity-inducing regularizers to enforce SH. While these lead to estimators with good statistical properties, computation remains a major challenge. 
Indeed, the complex nature of the regularizers used to enforce SH prevents most current optimization algorithms from scaling beyond $p \sim 10^3$. However, in many real-world applications $p$ can be in the order of tens of thousands, which is limiting the adoption of SH methods in practice.
To address this shortcoming, we propose a convex regularization framework and develop a novel scalable algorithm, by carefully exploiting 
the problem-specific structural sparsity.
Our algorithm can solve the SH learning problem with $p=50,000$ ($\sim 10^9$ interactions) and achieves significant speed-ups (over $4900$x) compared to the state of the art. 


\subsection{Problem Formulation} 
Various convex regularizers (a.k.a. penalties) for enforcing SH exist in the literature. In many cases, the choices seem somewhat ad hoc \textcolor{black}{and involve auxiliary variables that can increase the memory and computational footprints}. Here we transparently derive our regularizer via a convex relaxation of a mixed integer program (MIP) \citep{bertsimas1997introduction} that enforces SH. In what follows, we denote the interaction column $X_i * X_j $ by $\tilde{X}_{ij}$. Performing variable selection under SH for model \eqref{eq:mainmodel} can be naturally expressed using $\ell_0$ regularization:
\begin{align} \label{eq:problemInitial}
\min_{\beta, \theta}~~ & f(\beta, \theta) + \alpha_1 \| \beta \|_0 + \alpha_2 \| \theta \|_0  \quad \\ & \text{s.t.} ~~ \theta_{ij} \neq 0 \implies \beta_i \neq 0 \text{ and } \beta_j \neq 0 \nonumber
\end{align}
where $f(\beta, \theta) = \frac{1}{2} \| y - X \beta - \sum_{i < j} \tilde{X}_{ij} \theta_{ij} \|_2^2$ and $\| u \|_0$ denotes the number of non-zeros in the vector $u$. The parameters $\alpha_1$ and $\alpha_2$ control the number of non-zeros. Note that we ignore the intercept term to simplify the presentation. 

\textcolor{black}{Problem \eqref{eq:problemInitial} can be expressed using the following MIP\footnote{There can be pathological cases where the MIP does not satisfy SH. However, when $y$ is drawn from a continuous distribution, SH is satisfied w.p. 1. We discuss this in more detail in Section 1 of the appendix.}, in which we introduce auxiliary binary variables to model the SH constraint and the $\ell_0$ (pseudo) norms:}
\begin{align} \label{eq:MIP}
\begin{split}
\min_{\beta, \theta, z}~~ & f(\beta, \theta) + \alpha_1 \sum\nolimits_i z_i + \alpha_2 \sum\nolimits_{i < j} z_{ij} \\ 
\text{s.t.}~~& |\beta_i| \leq M z_i , \ \forall \  i \quad \\ \quad  & 
|\theta_{ij}| \leq M z_{ij}, ~~ z_{ij} \leq z_i, ~~ z_{ij} \leq z_j, \ \forall \ i < j \\
& z_i \in \{0,1\} \ \forall \ i, ~~ z_{ij} \in \{0,1\} \ \forall \ i < j
\end{split}
\end{align}
\textcolor{black}{where the optimization variables are $\beta$, $\theta$, and $z$}. In the above, $M$ is a large constant chosen such that some optimal solution $\beta^{*}, \theta^{*}$ to \eqref{eq:problemInitial} satisfies $\| \beta^{*}, \theta^{*} \|_{\infty} \leq M$. Here $z_i = 0$ implies $\beta_i = 0$ and similarly $z_{ij} = 0$ implies $\theta_{ij} = 0$. Problem \eqref{eq:MIP} is known to be NP-Hard \citep{natarajan1995sparse} and can be very difficult to scale. Thus, our focus will be on solving a convex relaxation. For easier presentation, we introduce some notation. For every $i \in \{ 1,2, \dots, p \}$, we define $G_i$ as the set of indices of all interactions corresponding to $\beta_{i}$, i.e.,
$$G_i \eqdef \{ (1,i), (2,i), \dots, (i-1,i), (i,i+1), \dots, (i,p) \}.$$
We use the notation $\theta_{G_i}$ to refer to the vector of $\theta_{ij}$'s whose indices are in $G_i$. 



\textcolor{black}{In Lemma \ref{lemma:relaxation}, we derive a convex relaxation of Problem \eqref{eq:MIP}. Specifically, we relax the binary variables in Problem \eqref{eq:MIP} to $[0,1]$, and then simplify the problem to remove the dependence on the $z_i$'s and $z_{ij}$'s, i.e., we move from the extended $(\beta, \theta, z)$ space back to the original $(\beta, \theta)$ space. The resulting relaxation involves box constraints on the coefficients $\beta, \theta$---we eliminate these constraints to simplify the problem.}

\begin{lemma}{(Convex Relaxation)}\label{lemma:relaxation}
\textcolor{black}{The following is a convex relaxation of Problem \eqref{eq:MIP}:}
\begin{align} \label{eq:relaxation}
\min_{\beta,\theta}~~~ f(\beta, \theta) + \Omega(\beta, \theta),
\end{align}
where
$$
\Omega(\beta, \theta) \eqdef \lambda_1 \sum_{i=1}^{p} \max \{ |\beta_i|, \| \theta_{G_i} \|_{\infty} \} + \lambda_2 \| \theta \|_1,$$
and
$\lambda_1 = {\alpha_1}/{M}, \lambda_2 = {\alpha_2}/{M}$.
\end{lemma}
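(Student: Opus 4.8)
The plan is to eliminate the auxiliary variables $z$ by partial minimization, exploiting the fact that they enter the objective only through separable linear terms. After relaxing $z_i, z_{ij} \in \{0,1\}$ to $z_i, z_{ij} \in [0,1]$, the objective $f(\beta,\theta) + \alpha_1 \sum_i z_i + \alpha_2 \sum_{i<j} z_{ij}$ depends on $z$ only through the two linear sums, while $f$ is independent of $z$. I would therefore write $\min_{\beta,\theta,z} = \min_{\beta,\theta}\min_z$, fix $(\beta,\theta)$, and solve the inner linear program $\min_z \{\alpha_1 \sum_i z_i + \alpha_2 \sum_{i<j} z_{ij}\}$ over the feasible set.

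First I would read off the lower bounds that the constraints impose on each variable. Each $z_{ij}$ is bounded below only by $|\theta_{ij}|/M$ (from $|\theta_{ij}| \le M z_{ij}$), whereas each $z_i$ is bounded below by $|\beta_i|/M$ together with $z_{ij}$ for every interaction indexed in $G_i$ (from the coupling constraints $z_{ij} \le z_i$ and $z_{ij} \le z_j$ for $i<j$, which jointly lower-bound $z_i$ by all $z_{ij}$ involving index $i$). Since $\alpha_1,\alpha_2 > 0$, minimizing drives every variable to its lower bound. The key observation is that shrinking $z_{ij}$ only relaxes the coupling bound on $z_i$, so there is no tension between the two families: setting $z_{ij}^\star = |\theta_{ij}|/M$ is optimal, after which $z_i$ can be taken as small as $\max\{|\beta_i|/M, \max_{(i,j)\in G_i} z_{ij}^\star\} = \tfrac{1}{M}\max\{|\beta_i|, \|\theta_{G_i}\|_\infty\}$.

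Substituting $z^\star$ back collapses the $z$-terms into $\tfrac{\alpha_1}{M}\sum_i \max\{|\beta_i|,\|\theta_{G_i}\|_\infty\} + \tfrac{\alpha_2}{M}\sum_{i<j}|\theta_{ij}|$, which is exactly $\Omega(\beta,\theta)$ under the identifications $\lambda_1 = \alpha_1/M$ and $\lambda_2 = \alpha_2/M$, yielding Problem~\eqref{eq:relaxation}.

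The step I expect to require the most care is handling the upper bounds $z_i \le 1$ and $z_{ij} \le 1$, which I tacitly ignored when pushing each variable to its lower bound. These upper bounds are equivalent to the box constraints $\|\beta\|_\infty \le M$ and $\|\theta\|_\infty \le M$, which do not appear in Problem~\eqref{eq:relaxation}. The cleanest resolution is to invoke the choice of $M$: because $M$ is taken large enough that the relevant optimal solution has infinity-norm at most $M$, the inequalities $z_i^\star \le 1$ and $z_{ij}^\star \le 1$ hold at optimality and are inactive, so dropping them leaves the optimal value and minimizer unchanged. I would state this dependence on $M$ explicitly rather than leave it implicit.
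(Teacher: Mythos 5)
Your partial-minimization argument is exactly the paper's proof of Lemma~\ref{lemma:relaxation}: fix $(\beta,\theta)$, observe that feasibility forces $z_{ij} \geq |\theta_{ij}|/M$ and $z_i \geq \max\{|\beta_i|/M,\, \max_{k,j \in G_i} z_{kj}\}$, push every variable to its lower bound (jointly feasible, since shrinking $z_{ij}$ only loosens the bound on $z_i$), and substitute to obtain $\Omega(\beta,\theta)$ with $\lambda_1 = \alpha_1/M$ and $\lambda_2 = \alpha_2/M$. Up to that point your write-up is correct and, if anything, more explicit than the paper's about why the two families of lower bounds can be attained simultaneously.

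The gap is in your final step. You drop the upper bounds $z_i \leq 1$, $z_{ij} \leq 1$ (equivalently the box $\|\beta\|_\infty \leq M$, $\|\theta\|_\infty \leq M$) on the grounds that they are inactive at optimality because ``$M$ is taken large enough that the relevant optimal solution has infinity-norm at most $M$.'' But $M$ is defined in the paper to bound some optimal solution of the $\ell_0$ problem \eqref{eq:problemInitial}, not of the convex relaxation: nothing guarantees that the minimizer of $f + \Omega$ (boxed or unboxed) satisfies $\|\beta,\theta\|_\infty \leq M$, since the shrinkage induced by $\Omega$ does not act coordinatewise toward the $\ell_0$ solution, and individual coordinates of the relaxed minimizer can in principle exceed those of the $\ell_0$ solution. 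So the claim that removing the box ``leaves the optimal value and minimizer unchanged'' is unjustified. The paper handles this with a deliberately weaker assertion: partial minimization yields \eqref{eq:relaxation} \emph{subject to} the box constraint $\|\beta,\theta\|_\infty \leq M$, and removing that constraint can only decrease the optimal value, so the unboxed problem \eqref{eq:relaxation} is still a valid relaxation of the MIP \eqref{eq:MIP} --- the lemma's ``leads to'' is meant in that sense, with no claim that the boxed and unboxed problems share a minimizer. To repair your proof, either weaken your conclusion in the same way, or state as an additional assumption (beyond the paper's definition of $M$) that $M$ also bounds the minimizer of \eqref{eq:relaxation}.
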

The focus of our paper is on solving \eqref{eq:relaxation}. 
 We note that the relaxation in \eqref{eq:relaxation} belongs to the original $(\beta, \theta)$ space. From an algorithmic perspective, this space is easier to work with compared to an extended space (e.g., the one involving $z$).
\citet{she2018group} propose a general family of problems for enforcing SH, which includes our relaxation above as a special case \textcolor{black}{(however, they do not discuss the connections to the original Problem \eqref{eq:problemInitial})}. The solutions of \eqref{eq:relaxation} satisfy SH with probability one under model \eqref{eq:mainmodel} when $\epsilon \sim N(0,\sigma^2 I)$ (see \cite{she2018group}). For a general response $y$, there can be pathological cases where SH is not satisfied by \eqref{eq:relaxation}, however, these cases rarely appear in practice and are usually not considered by 
current regularization-based approaches for SH \citep{radchenko2010variable, bien2013lasso, lim2015learning, she2018group}. 


\subsection{Contributions}
The main contribution of our work is developing a scalable algorithm for solving \eqref{eq:relaxation}. Our proposal is based on proximal gradient descent (PGD). However, PGD is limited by two major computational bottlenecks, due to the scale of the problem. First, the proximal problem does not admit a closed-form solution, so solving it requires running an iterative optimization algorithm over $\mathcal{O}(p^2)$ variables. Second, the repeated gradient computations can be very expensive as each computation requires $\mathcal{O}(n p^2)$ operations.

To mitigate these bottlenecks, we \textbf{(i)} introduce new \textsl{proximal screening rules} that can efficiently identify many of the zero variables and groups in the proximal problem, \textbf{(ii)} demonstrate how our proposed screening rules can decompose the proximal problem so that it can be solved in parallel, and \textbf{(iii)} develop a specialized active-set algorithm along with a novel \textsl{gradient screening} method for avoiding costly gradient evaluations. We open source the implementation through our toolkit hierScale\footnote{\url{https://pypi.org/project/hierScale}}. Moreover, we demonstrate how our algorithm scales to high-dimensional problems having dense matrices with $p = 50,000$ ($\sim10^{9}$ interactions), achieving over $4900$x speed-ups compared to the state of the art.


\subsection{Related Work}
Many methods for enforcing SH exist in the literature. The methods can be broadly categorized into multi-step methods (e.g., \citet{wu2010screen, hao2014interaction}), Bayesian and approximate methods (e.g., \citet{chipman1996bayesian, thanei2018xyz}), and optimization-based methods. We discuss relevant work 
in the latter category as they are directly related to our work. \citet{choi2010variable} re-parameterize the interactions problem so that $\theta_{ij} = \gamma_{ij} \beta_i \beta_j$ (where $\gamma_{ij}$ is an optimization variable) and enforce sparsity by adding an $\ell_1$ norm regularization on $\beta$ and $\theta$---this approach, however, leads to a challenging non-convex optimization problem. \citet{radchenko2010variable} enforce SH by using $\ell_2$ regularization on the predictions made by every group. \citet{bien2013lasso} propose the Hierarchical Lasso, which shares a similar objective with our problem, except that $\| \theta_{G_i} \|_1$ is used instead of $\| \theta_{G_i} \|_{\infty}$. They use ADMM~\citep{boyd2011distributed} for computation. \citet{lim2015learning} propose an overlapped group Lasso formulation and solve it using a variant of the FISTA algorithm \citep{beckfista} along with strong screening rules \citep{tibshirani2012strong}. Their toolkit glinternet seems to be the fastest toolkit for learning sparse interactions we are aware of. \citet{she2018group} consider a formulation similar to \eqref{eq:relaxation}, but with the $\ell_{\infty}$ norm replaced with $\ell_{2}$ norm, and develop prediction error bounds and a splitting-based algorithm---the largest problem they consider has $p=1000$. 

\citet{mairal2011convex} consider learning problems regularized with sum of $\ell_{\infty}$ norms over groups (with potential overlaps), which includes our problem as a special case. They show that the proximal operator can be efficiently solved using network flow algorithms and propose algorithms based on PGD and ADMM. Our approaches differ: here we exploit the specific structure of our objective function (particularly the presence of both the $\ell_{\infty}$ and $\ell_1$ norms) to derive the proximal screening rules and decompose the proximal problem---such screening/decomposition rules are not discussed in~\cite{mairal2011convex}. 
We also note that \citet{jenatton2011proximal} develop a scalable PGD-based algorithm for problems regularized with sums of $\ell_{\infty}$ or $\ell_2$ norms, under a tree structure constraint. However, our model does not satisfy this constraint and thus their algorithms are not applicable. Many other works also consider algorithms for structured sparsity and discuss interesting connections to submodularity (e.g., see \cite{bach2009exploring, bach2010structured, bach2013learning} and the references therein). Unfortunately, prior work cannot easily scale beyond $p$ in the hundreds to few thousands. A main reason is that the standard algorithms (e.g., PGD and ADMM) are limited by costly gradient evaluations and by solving expensive sub-problems (e.g., solving the proximal problem in PGD requires an iterative optimization method). Our proposal addresses these key computational bottlenecks.



{\bf Notation and supplement:} We use the notation $[p]$ to refer to the set $\{ 1, 2, \dots, p \}$. We denote the complement of a set $A$ by $A^c$. 
For a set $A \subseteq [p]$, $\beta_A$ refers to the sub-vector of $\beta$ restricted to coordinates in $A$.
We use $\nabla_{\beta_A, \theta_B} f(\beta, \theta)$ to refer to the components of $\nabla f(\beta, \theta)$ corresponding to the vectors $\beta_A$ and $\theta_B$. For any scalar $a$, we define $[a]_{+} = \max\{a, 0\}$. 
A function $u \mapsto g(u)$ is said to be Lipschitz with parameter $L$ if $\| g(u) - g(v) \|_2 \leq L \| u - v \|_2$ for all $u,v$. Proofs of all lemmas and theorems are in the supplementary file. 
\section{Proximal Screening and Decomposition} \label{section:PGD}
To solve the non-smooth convex problem~\eqref{eq:relaxation}, we use PGD~\citep{beckfista, nesterov2013gradient}, which is an effective and popular choice for handling structured sparse learning problems (e.g., see \cite{bach2010structured,mairal2011convex, chen2012smoothing} and references therein). However, there are two major bottlenecks: (i) solving the proximal problem which does not admit a closed-form solution and (ii) the repeated gradient computations each requiring $\mathcal{O}(np^2)$ operations. In this section, we address bottleneck (i) through new proximal screening and decomposition rules, and we handle (ii) in Section \ref{section:screening} using active-set updates and gradient screening.

We first present the basic PGD algorithm below. 
\begin{itemize}[leftmargin=*]
\item[] \textbf{Algorithm 1: Proximal Gradient Descent}
\item Input: $\beta^{0}, \theta^{0}$ and $L$: the Lipschitz parameter of the gradient map $(\beta, \theta) \mapsto \nabla f(\beta, \theta)$.
\item \textbf{For} $k\geq 0$ repeat the following till convergence:         \begin{align}
            \tilde{\beta} \gets \beta^{k} - \nabla_{\beta} f(\beta^{k}, \theta^{k})/L, ~~\tilde{\theta} \gets \theta^{k} -&  \nabla_{\theta} f(\beta^{k}, \theta^{k})/L  \nonumber \\
              \begin{bmatrix} \beta^{k+1} \\ \theta^{k+1} \end{bmatrix} \gets \argmin_{\beta, \theta} \frac{L}{2} \Bigg \| \begin{bmatrix} \beta - \tilde{\beta} \\ \theta - \tilde{\theta} \end{bmatrix} \Bigg \|_2^2 & +  \Omega(\beta, \theta)  \label{eq:proximalupdate}
              \end{align}
\end{itemize}
The sequence $\{\beta^{k}\}$ generated by Algorithm 1 is guaranteed to converge to an optimal solution ~\citep{beckfirstorder}. The objective values converge at a rate of $\mathcal{O}(1/k)$, and this can be improved to $\mathcal{O}({1}/{k^2})$ by using accelerated PGD \citep{nesterov2013gradient,beckfirstorder}. However, there is no closed-form solution for the proximal problem in \eqref{eq:proximalupdate}---this is due to the overlapping variables in the $\ell_{\infty}$ norms. Thus, iterative optimization algorithms are needed to solve \eqref{eq:proximalupdate}. For example, \citet{mairal2011convex} presents a dual reformulation and an efficient network flow algorithm for solving a class of proximal problems which includes \eqref{eq:proximalupdate}. In the appendix, we present an alternative dual which uses less variables (as we exploit the specific structure of our problem) and present a dual block coordinate ascent (BCA) algorithm for solving it. However, iterative algorithms (e.g., \citet{mairal2011convex}'s or our proposed BCA) require significant time to solve \eqref{eq:proximalupdate} when $p$ is large, which can lead to a serious bottleneck. 

\textcolor{black}{In what follows, we present algorithm-agnostic schemes to speed up solving the proximal problem. In Section \ref{section:proxscreening}, we introduce screening rules to eliminate variables from the proximal problem (prior to optimization). In Section \ref{section:decomposition}, we demonstrate how these rules can decompose the proximal problem, which allows for solving it in parallel.}
\subsection{Proximal Screening} \label{section:proxscreening}
Typically, we expect the solutions of \eqref{eq:proximalupdate} to be sparse, as $\Omega(\beta, \theta)$ incorporates sparsity-inducing norms. To exploit this sparsity, we propose new \textsl{proximal screening} rules, which can efficiently identify many of the zero groups and variables in \eqref{eq:proximalupdate}. We present the rules in Theorem \ref{theorem:proximalscreening}.
\begin{theorem}(Proximal Screening) \label{theorem:proximalscreening}
Let $\beta^{*}, \theta^{*}$ be the optimal solution of Problem~\eqref{eq:proximalupdate}. Then, the following \textsl{group-level rule} holds for every $i\in [p]$:
\begin{align} 
  \sum_{t \in G_i} \Big[| \tilde{\theta}_{t} | - \frac{\lambda_2}{L} \Big]_{+} \leq \frac{\lambda_1}{L} - | \tilde{\beta}_i | \implies \beta^{*}_i, \theta^{*}_{G_i} = 0, \label{eq:proximalscreening1}
  \end{align}
  and the following \textsl{feature-level rule} holds for $i < j$:
  \begin{align} \label{eq:proximalscreening2}
  |\tilde{\theta}_{ij}| \leq \frac{\lambda_2}{L} \implies \theta^{*}_{ij} = 0.
\end{align}
\end{theorem}
The rules in Theorem \ref{theorem:proximalscreening} can be used to optimize \eqref{eq:proximalupdate} over a smaller set of variables (i.e., only over the variables that do not pass the screening checks). These rules are easy to check. Particularly, the rule in \eqref{eq:proximalscreening1} requires $\mathcal{O}(p)$ operations to screen a group of $p$ variables. The feature-level rule allows us to set $\theta_{ij}$'s with $|\tilde{\theta}_{ij}| \leq \lambda_2/L$ to zero. The group-level rule is less restrictive: in group $i$, the $\theta_{ij}$'s with $|\tilde{\theta}_{ij}| > \lambda_2/L $ can be still set to zero if $| \tilde{\beta}_i |$ is sufficiently small (i.e., if the contribution of main effect $i$ is weak). 

\textcolor{black}{\textbf{Related Work on Screening Rules:} Our proposed rules are safe in the sense that only variables that are zero in the optimal solution of \eqref{eq:proximalupdate} can be discarded. However, our rules are different from the \textsl{safe screening rules} used in the literature (e.g., \cite{ghaoui2010safe, wang2013lasso, bonnefoy2015dynamic, lee2014screening, wang2014two, ndiaye2016gap, nakagawa2016safe, ndiaye2017gap}) that are designed to identify zero variables in the full problem. In particular, our rules can potentially eliminate more variables in the proximal problem compared to the safe rules, since a variable can be safe to eliminate from a proximal problem but not from the full problem. This allows for exploiting more parallelism when solving the proximal problem (see Theorem \ref{theorem:decomp}). Moreover, the state-of-the-art safe rules (e.g., the sequential rules of \citet{wang2013lasso}, and the dynamic and Gap rules of \citet{bonnefoy2015dynamic} and \citet{ndiaye2017gap}), update (improve) the rules as the algorithm progresses, by leveraging previous solutions. However, every such rule update entails a full gradient computation, which can be very costly in our problem (see \cite{ndiaye2017gap} for a survey and a discussion on this computational issue). On the other hand, our rules do not require gradient computations. In our experiments, we compare against glinternet \citep{lim2015learning} which uses strong screening rules (an approximate and aggressive variant of safe rules proposed in~\cite{tibshirani2012strong}).}
\subsection{Proximal Decomposition} \label{section:decomposition}
An important consequence of Theorem~\ref{theorem:proximalscreening} is that it usually decomposes Problem~\eqref{eq:proximalupdate} into many independent smaller optimization problems. This allows solving~\eqref{eq:proximalupdate} in parallel. Before presenting the decomposition formally, we give a simple motivating example. 
\begin{example}
Suppose $p=2$ with one interaction effect; and rule~\eqref{eq:proximalscreening2} has identified $\theta^{*}_{12} = 0$. We can now eliminate $\theta_{12}$ and solve the proximal problem \eqref{eq:proximalupdate} with $\Omega(\beta, \theta) = \lambda_1 |\beta_1| + \lambda_1 |\beta_2|$. This decomposes the problem into two independent optimization tasks involving $\beta_1$ and $\beta_2$---the solutions can be easily obtained via soft-thresholding.
\end{example}
Next, we formalize the idea of decomposition. Let us define $\mathcal{V}$ as the set of indices of the groups that do not pass the screening test in \eqref{eq:proximalscreening1}, i.e.,
\begin{align} \label{eq:V}
    \mathcal{V} = \Big\{ i \in [p] ~\  \Big| \sum_{t \in G_i} \Big[| \tilde{\theta}_{t} | - \frac{\lambda_2}{L} \Big]_{+} > \frac{\lambda_1}{L} - | \tilde{\beta}_i | \Big\}.
\end{align}
We also define $\mathcal{E}$ as the set of interaction indices that do not pass the test in \eqref{eq:proximalscreening2} and whose corresponding main indices are in $\mathcal{V}$, i.e.,
\begin{align} \label{eq:E}
    \mathcal{E} =  \Big\{  (i,j) \in \mathcal{V}^2 ~ \ | \  |\tilde{\theta}_{ij}| > \lambda_2/L \Big\}.
\end{align}

\begin{definition}{(Connected Components)} Define the simple undirected graph $\mathcal{G} = (\mathcal{V},\mathcal{E})$ where the vertex set $\mathcal{V}$ is defined in \eqref{eq:V} and the edge set $\mathcal{E}$ is defined in \eqref{eq:E}. Let the $\kappa$ connected components of $\mathcal{G}$ be denoted by $\{{\mathcal G}_{l}\}_{1}^\kappa$, where $\mathcal{G}_l =(\mathcal{V}_l, \mathcal{E}_l),~l \in [\kappa]$.
\end{definition}
Theorem~\ref{theorem:decomp} states that Problem \eqref{eq:proximalupdate} can be solved by decomposing it into smaller independent optimization problems, each corresponding to a connected component of the graph $\mathcal{G}$.
\begin{theorem} \label{theorem:decomp}
Let $\beta^{*}, \theta^{*}$ be the optimal solution of the proximal problem in \eqref{eq:proximalupdate}. Then, $\beta_{\mathcal{V}^c}^{*} = 0,~~\theta_{\mathcal{E}^c}^{*} = 0$, and for every $l \in [\kappa]$:
\begin{align*} 
&\begin{bmatrix} \beta^{*}_{\mathcal{V}_l} \\ \theta^{*}_{\mathcal{E}_l} \end{bmatrix} = \argmin_{ \beta_{\mathcal{V}_l}, \theta_{\mathcal{E}_l}} \frac{L}{2} \Bigg \| \begin{bmatrix} \beta_{\mathcal{V}_l} - \tilde{\beta}_{\mathcal{V}_l} \\ \theta_{\mathcal{E}_l} - \tilde{\theta}_{\mathcal{E}_l} \end{bmatrix} \Bigg \|_2^2 +  \Omega(\beta_{\mathcal{V}_l}, \theta_{\mathcal{E}_l}).
\end{align*}
\end{theorem}

Theorem 2 allows for solving \eqref{eq:proximalupdate} in parallel. The extent of parallelism depends on the number of the connected components and their sizes. In practice, we solve the problem for a regularization path with warm starts\footnote{A regularization path is the sequence of solutions obtained by solving \eqref{eq:relaxation} for a sequence of $\lambda_1$'s and $\lambda_2$'s. The solution of the current $\lambda_1, \lambda_2$ is used as a warm start (initial solution) when solving for the next $\lambda_1, \lambda_2$ in the sequence.}, along with active-set updates (discussed in Section \ref{section:screening}). These can significantly reduce the sizes of the connected components. Indeed, our experiments on real high-dimensional datasets (with $p \sim 5000$), indicate that the maximum number of vertices and edges in each connected component is in the order of hundreds to few thousands (for all solutions in the path)---see the supplementary for details. The majority of the connected components are typically isolated (i.e., composed of a single vertex), so their corresponding optimization problems can be solved by a simple soft thresholding operation. However, we note that, in the absence of warm starts, the number of edges can grow into hundreds of thousands for the same datasets.


\section{Active Sets and Gradient Screening} \label{section:screening}
In Section \ref{section:PGD}, we addressed the bottleneck of solving the proximal problem. In this section, we focus on another major bottleneck: the repeated full gradient computations in PGD. Particularly, we develop an active-set algorithm that exploits the screening rules and decomposition we introduced in Section \ref{section:PGD} to reduce the number of full gradient computations. Moreover, we introduce a new gradient screening method which aids in reducing the cost of every gradient computation.

\subsection{Active Set Updates}

Every evaluation of $\nabla f(\beta, \theta)$ in Algorithm 1 requires $\mathcal{O}(n p^2)$ operations, which can take several minutes on a modern machine when $p \sim 50,000$. To minimize the number of these full gradient evaluations, we propose an active-set algorithm: we run Algorithm 1 over a small subset of variables, namely, the active set. After obtaining an optimal solution restricted to the active set, we augment the set with the variables that violate the optimality conditions (if any) and resolve the problem on the new set. Such an approach is effectively used for speeding up sparse learning algorithms in other contexts \citep{meier2008group, glmnet, morvan2018whinter}. 

Our active set is defined by the sets ${\mathcal A}$ and ${\mathcal T}$
containing indices of the main effects and interaction effects (respectively) to be included in the model.
Given $\mathcal{A}$ and $\mathcal{T}$ we consider
\begin{equation}
\begin{aligned} \label{eq:activesetProblem}
\hat{\beta}, \hat{\theta} \in & \argmin\nolimits_{\beta, \theta} ~~  f(\beta, \theta) + \Omega(\beta, \theta ) \\ & \text{s.t. }~~~\beta_{\mathcal{A}^c} = 0, ~~\theta_{\mathcal{T}^c} = 0, 
\end{aligned}
\end{equation}

which is solved with Algorithm 1 {\em restricted} to the active set.
To check whether $\hat{\beta}, \hat{\theta}$ is optimal for Problem \eqref{eq:relaxation}, we run a single iteration of Algorithm 1 over all the variables (including those outside the active set) -- we refer to this as the \textsl{master iteration}. If the master iteration does not change the support (i.e., the non-zeros), then the solution $\hat{\beta}, \hat{\theta}$ is optimal. Otherwise, we augment $\mathcal{A}$ and $\mathcal{T}$ with the variables that became non-zero (after the iteration) and solve \eqref{eq:activesetProblem} again. To speed up the costly master iteration, we perform screening and decompose the master iteration as described in Theorem \ref{theorem:decomp} so that it can be solved in parallel. We present the algorithm more formally below:
\begin{itemize}[leftmargin=*]
\item[] \textbf{Algorithm 2: Parallel Active-set Algorithm}
\item Input: Initial estimates of $\mathcal{A}$ and $\mathcal{T}$
\item \text{\textbf{Repeat} until convergence:}
\begin{enumerate}
    \item Solve the problem restricted to the active set, i.e., \eqref{eq:activesetProblem} to get a solution $\hat{\beta}, \hat{\theta}$.
    \item Compute $\nabla_{\beta} f(\hat{\beta}, \hat{\theta})$ and $\nabla_{\theta} f(\hat{\beta}, \hat{\theta})$. Set $\tilde{\beta} \gets \hat{\beta} - \frac{1}{L} \nabla_{\beta} f(\hat{\beta}, \hat{\theta}), ~~\tilde{\theta} \gets \hat{\theta} - \frac{1}{L} \nabla_{\theta} f(\hat{\beta}, \hat{\theta})$.
    \item Construct the graph $\mathcal{G} = (\mathcal{V}, \mathcal{E}$) in Definition 1 and find its connected components.
    \item Solve \eqref{eq:proximalupdate} in parallel using Theorem 2. If the support stays the same, \textbf{terminate}, o.w., augment $\mathcal{A}$ and $\mathcal{T}$ with variables that just became non-zero.
\end{enumerate}
\end{itemize}
Steps 2-4 in Algorithm 2 are the equivalent of performing one iteration of Algorithm 1 over all variables, while using screening and decomposition. We note that screening and decomposition are also very useful at the active-set level (i.e., for step 1) as we need to repeatedly solve the proximal problem till convergence. Typically, the active-set sizes are relatively small, which can help further mitigate the cost of solving the proximal problem.
\subsection{Gradient Screening}
Algorithm 2 effectively reduces the total number of gradient computations. However, even a single gradient computation can take minutes for $p \sim 50,000$. Here we propose a novel gradient screening method to reduce the cost of every gradient computation in Algorithm 2. Specifically, every time a gradient is needed, we identify parts of the gradient that are not essential to optimization---this allows for computing a smaller (and cheaper) gradient. This is a major improvement over current active-set approaches, which require a full gradient computation to check optimality. 


We note that the full gradient in step 2 of Algorithm 2 is only used to construct the graph $\mathcal{G}$ in step 3. The next lemma, states that only a part of the gradient is needed to construct $\mathcal{G}$.
\begin{lemma} \label{lemma:gradexplain}
The graph $\mathcal{G}$ in step 2 of Algorithm 2 can be constructed from the following gradients: $\nabla_{\beta} f(\hat{\beta}, \hat{\theta})$, $\nabla_{\theta_{\mathcal{T}}} f(\hat{\beta}, \hat{\theta})$, and $\nabla_{\theta_{\mathcal{S}}} f(\hat{\beta}, \hat{\theta})$, where $\mathcal{S}$ is the \textsl{critical set} defined by
\begin{align} \label{eq:safeset}
\mathcal{S} = \{ (i,j) \in \mathcal{T}^c \ | \ |\nabla_{\theta_{ij}} f(\hat{\beta}, \hat{\theta})| > \lambda_2\}.
\end{align}
\end{lemma}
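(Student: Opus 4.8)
The plan is to show that constructing the graph $\mathcal{G} = (\mathcal{V}, \mathcal{E})$ requires only the listed gradient components, by examining which gradient entries enter the definitions of $\mathcal{V}$ in \eqref{eq:V} and $\mathcal{E}$ in \eqref{eq:E}. Recall that the master iteration sets $\tilde{\beta} = \hat{\beta} - \nabla_\beta f(\hat\beta,\hat\theta)/L$ and $\tilde{\theta} = \hat{\theta} - \nabla_\theta f(\hat\beta,\hat\theta)/L$, and then $\mathcal{V}$ and $\mathcal{E}$ are defined entirely in terms of the $\tilde\beta_i$ and $\tilde\theta_{ij}$. So the task reduces to arguing that only the claimed gradient blocks are needed to determine the membership tests in \eqref{eq:V} and \eqref{eq:E}.

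First I would handle $\mathcal{V}$. Its definition uses $|\tilde\beta_i|$ for all $i$ and the sum $\sum_j [|\tilde\theta_{ij}| - \lambda_2/L]_+$ over each group. The $\tilde\beta_i$ require all of $\nabla_\beta f(\hat\beta,\hat\theta)$, which is in the allowed list. For the group sums, the key observation is that $\hat\theta_{ij} = 0$ for $(i,j) \in \mathcal{T}^c$ (by the active-set constraint $\theta_{\mathcal{T}^c} = 0$), so for those indices $\tilde\theta_{ij} = -\nabla_{\theta_{ij}} f(\hat\beta,\hat\theta)/L$, giving $|\tilde\theta_{ij}| = |\nabla_{\theta_{ij}} f|/L$. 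Then the thresholded term $[|\tilde\theta_{ij}| - \lambda_2/L]_+$ is nonzero precisely when $|\nabla_{\theta_{ij}} f| > \lambda_2$, i.e. exactly when $(i,j) \in \mathcal{S}$. Thus within $\mathcal{T}^c$ only the critical-set entries $\nabla_{\theta_{\mathcal{S}}} f$ contribute to the group sums; the remaining $\mathcal{T}^c$ entries contribute zero and can be ignored. Combined with the $\mathcal{T}$-entries $\nabla_{\theta_{\mathcal{T}}} f$ (where $\hat\theta$ may be nonzero), this shows each group sum, hence membership in $\mathcal{V}$, is determined by $\nabla_\beta f$, $\nabla_{\theta_{\mathcal{T}}} f$, and $\nabla_{\theta_{\mathcal{S}}} f$ alone.

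Next I would handle $\mathcal{E}$. An edge $(i,j)$ requires both endpoints in $\mathcal{V}$ and $|\tilde\theta_{ij}| > \lambda_2/L$. By the same identity as above, for $(i,j) \in \mathcal{T}^c$ the condition $|\tilde\theta_{ij}| > \lambda_2/L$ is equivalent to $|\nabla_{\theta_{ij}} f| > \lambda_2$, which is exactly the defining condition of $\mathcal{S}$; for $(i,j) \in \mathcal{T}$ the relevant gradient entry is already available. Hence every candidate edge test uses only gradient entries in $\mathcal{T} \cup \mathcal{S}$, which are supplied by $\nabla_{\theta_{\mathcal{T}}} f$ and $\nabla_{\theta_{\mathcal{S}}} f$. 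Since both $\mathcal{V}$ and $\mathcal{E}$ are fully determined by the three listed gradient blocks, the graph $\mathcal{G}$ can be constructed from them.

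The one subtlety I would be careful about is the circularity in the definition of $\mathcal{S}$: it is itself defined via the gradient on $\mathcal{T}^c$, so at face value identifying $\mathcal{S}$ seems to demand the full $\nabla_{\theta_{\mathcal{T}^c}} f$. The resolution—and the main point worth stating explicitly—is that $\mathcal{S}$ is characterized by a simple per-coordinate threshold $|\nabla_{\theta_{ij}} f| > \lambda_2$, so in practice one can screen coordinates cheaply (this is the \emph{gradient screening} the section is building toward) rather than forming and storing the entire gradient; the lemma's content is precisely that, once $\mathcal{S}$ is known, no gradient information outside $\mathcal{T} \cup \mathcal{S}$ is ever consulted in building $\mathcal{G}$. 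The core argument itself is just the substitution $\hat\theta_{\mathcal{T}^c} = 0$ and bookkeeping over the two index sets, so I expect no real obstacle beyond stating this characterization of $\mathcal{S}$ carefully.
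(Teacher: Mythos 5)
Your proof is correct and follows essentially the same route as the paper's: substitute $\hat{\theta}_{\mathcal{T}^c} = 0$ to get $|\tilde{\theta}_{ij}| = |\nabla_{\theta_{ij}} f(\hat{\beta},\hat{\theta})|/L$ on $\mathcal{T}^c$, observe that entries below the threshold $\lambda_2/L$ contribute nothing to the membership tests defining $\mathcal{V}$ and $\mathcal{E}$, and conclude that only the gradient blocks indexed by $\mathcal{T} \cup \mathcal{S}$ (plus $\nabla_{\beta} f$) are consulted. Your explicit remark on the apparent circularity in identifying $\mathcal{S}$ is a useful clarification; the paper defers that issue to its Lemma \ref{lemma:screening}, which constructs a computable superset $\hat{\mathcal{S}} \supseteq \mathcal{S}$.
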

Note that $\nabla_{\beta} f(\hat{\beta}, \hat{\theta})$ is relatively easy to compute, and $\nabla_{\theta_{\mathcal{T}}} f(\hat{\beta}, \hat{\theta})$ is available as a byproduct of step 1. Thus, if the size of $\mathcal{S}$ in \eqref{eq:safeset} is small, then Lemma~\ref{lemma:gradexplain} suggests a significant reduction in the computation time of step~2.
Specifically, if $\mathcal S$ is given, computing the gradients in Lemma~2 
has a cost $\mathcal{O}(n (p + |\mathcal{S}|))$, which can be much smaller than the cost of
full gradient computation $\mathcal{O}(n p^2)$.
As discussed below, we can obtain ${\mathcal S}$ without explicitly computing $\nabla_{\theta_{ij}} f(\hat{\beta}, \hat{\theta})$
for all $(i,j) \in {\mathcal T}^c$---an operation that 
costs $\mathcal{O}(n p^2)$.

Our key idea is to obtain a set $\hat{\mathcal{S}}$ that is guaranteed to contain $\mathcal{S}$ (i.e., $\mathcal{S} \subset \hat{\mathcal{S}}$), by using currently available information on gradients.
Note that $|\hat{\mathcal{S}}|$ can be larger than $|\mathcal{S}|$, but we will require $|\hat{\mathcal{S}}|$ to be significantly smaller than $p^2$. The next lemma, presents a way to construct $\hat{\mathcal{S}}$ by using the gradient of a solution $\beta^{w}, \theta^{w}$ that was obtained prior to $\hat{\beta}, \hat{\theta}$ (e.g., from a warm start or a previous iteration of Algorithm 2).

\begin{lemma} \label{lemma:screening}
Let $\beta^{w} \in \mathbb{R}^p$ and $\theta^{w} \in \mathbb{R}^{p \choose 2}$ be arbitrary vectors, and let $\mathcal{S}$ be the critical set defined in \eqref{eq:safeset}. Define $\gamma = (X \beta^{w} + \tilde{X} \theta^{w}) - (X \hat{\beta} + \tilde{X} 
\hat{\theta})$ and $C = \max_{i,j} \| \tilde{X}_{ij} \|_2$. 
Then, the following holds:
\begin{align*}
 \mathcal{S} \subseteq \hat{\mathcal{S}} \eqdef \{ (i,j) \in \mathcal{T}^c \ | \ | \nabla_{\theta_{ij}} f(\beta^{w}, \theta^{w}) | > \lambda_2 -  C \|\gamma \|_2   \}.
\end{align*}
\end{lemma}
The set $\hat{\mathcal{S}}$ in Lemma~\ref{lemma:screening} is constructed based on the gradient at a previous estimate $(\beta^{w}, \theta^{w})$; and not at the current point $(\hat{\beta}, \hat{\theta})$.
When the predictions made by the estimators $(\hat{\beta}, \hat{\theta})$ and $(\beta^{w}, \theta^{w})$ are close (i.e., small $\| \gamma \|_2$),  
$\hat{\mathcal{S}}$ is a good estimate of $\mathcal{S}$.


Lemma \ref{lemma:screening} lays the foundation of our \emph{gradient screening} procedure. During the course of Algorithm 2, we always maintain a solution $\beta^{w}, \theta^{w}$ for which we store $|\nabla_{\theta} f(\beta^{w}, \theta^{w})|$. We replace step 2 in Algorithm 2 with the following gradient screening module:
\begin{enumerate}
     \item[]  {\bf{Gradient Screening}} 
    \item Compute $\hat{\mathcal{S}}$ (defined in Lemma \ref{lemma:screening}) and $\nabla_{\theta_{\hat{\mathcal{S}}}} f(\hat{\beta}, \hat{\theta})$ to obtain $\nabla_{\theta_{{\mathcal{S}}}} f(\hat{\beta}, \hat{\theta})$.
    \item Compute $\nabla_{\beta} f(\hat{\beta}, \hat{\theta})$ and obtain $\nabla_{\theta_{\mathcal{T}}} f(\hat{\beta}, \hat{\theta})$. Set $\tilde{\beta} \gets \hat{\beta} - \frac{1}{L} \nabla_{\beta} f(\hat{\beta}, \hat{\theta})$ and $\tilde{\theta}_{ij} \gets \hat{\theta}_{ij} - \frac{1}{L} \nabla_{\theta_{ij}} f(\hat{\beta}, \hat{\theta})$ for every $(i,j) \in \mathcal{T} \cup {\mathcal{S}}$.
    
    
    \item If $|\hat{\mathcal{S}}| > p$, set $(\beta^{w}, \theta^{w}) \gets (\hat{\beta}, \hat{\theta})$ and compute/store $|\nabla_{\theta} f(\beta^{w}, \theta^{w})|$.
\end{enumerate}
The set $\hat{\mathcal{S}}$ can be identified in $\mathcal{O}(\log{p})$ by using a variant of binary search on the sorted entries of $|\nabla_{\theta} f(\beta^{w}, \theta^{w})|$ (the latter can be sorted once at a cost of $\mathcal{O}(p^2 \log p$) and stored). Moreover, computing $\nabla_{\theta_{\hat{\mathcal{S}}}} f(\hat{\beta}, \hat{\theta})$ and obtaining $\nabla_{\theta_{{\mathcal{S}}}} f(\hat{\beta}, \hat{\theta})$ is $\mathcal{O}(n (p + |\hat{\mathcal{S}}|))$. Thus, the complexity of gradient screening can be much smaller than $\mathcal{O}(np^2)$. \textcolor{black}{For gradient screening to yield speed-ups, we need $|\hat{\mathcal{S}}| \ll p^2$ (otherwise, the overall complexity will be similar to that with no gradient screening). Thus, in Step 3 in the above, we update $\beta^{w}, \theta^{w}$ when $|\hat{\mathcal{S}}|$ becomes large (recall that improving the estimate $\beta^{w}, \theta^{w}$ can reduce the size of $|\hat{\mathcal{S}}|$). In particular, Step 3 is performed when $|\hat{\mathcal{S}}| > p$; we choose the threshold $p$ to ensure that $|\hat{\mathcal{S}}|$ stays in the order of $p$ in subsequent iterations\footnote{Other choices are possible, but it is important that the chosen threshold keeps $|\hat{\mathcal{S}}|$ from growing in the order of $p^2$.}.}
The initial $\beta^{w}, \theta^{w}$ can be obtained from a previous solution in the regularization path. In practice, predictions across consecutive solutions in the path are usually close, making $\hat{\mathcal{S}}$ close to $\mathcal{S}$---this explains the multi-fold speed-ups we observe in our experiments.



\section{Experiments} \label{section:experiments}
We study the empirical performance of our algorithm and  compare with popular toolkits for learning interactions under SH: hierNet \citep{bien2013lasso} and glinternet \citep{lim2015learning}; in addition to 
Boosting with trees using XGBoost (which is often used to learn interactions). 
We also use the optimization toolbox SPAMS~\citep{mairal2011convex} to solve \eqref{eq:relaxation} and compare its running time with hierScale.


\noindent {\bf Our Toolkit hierScale:}
We open-sourced our toolkit hierScale (\url{https://pypi.org/project/hierScale/}), written in Python with critical code sections compiled into machine code using Numba \citep{lam2015numba}. hierScale has a low memory footprint (it generates interaction columns on the fly) 
and supports multi-core computation. 

\noindent {\bf Synthetic Data Generation:}
We generate $X_{n \times p}$ to be an iid standard Gaussian ensemble;
and form $y = X \beta^{0} + \tilde{X} \theta^{0} + \epsilon$, where $\epsilon_i \stackrel{\text{iid}}{\sim} N(0,\sigma^2)$ is independent of $X$. For $\beta^{0}$ and $\theta^{0}$, we consider three settings: (I) {Hierarchical Truth}: $\beta^{0}$ and $\theta^{0}$ satisfy SH, (II) {Anti-Hierarchical Truth}: $\theta^{0}_{ij} \neq 0 \implies \beta^{0}_{i} =0, \beta^{0}_{j} =0$, and (III) {Main-Only Truth}: $\theta^{0} = 0$. In (I)--(III), all non-zero coefficients are set to $1$. For (I) and (II), we set $\| \theta^{0} \|_0 = \| \beta^{0} \|_0$. We take $\sigma^2$ such that the signal-to-noise-ratio (SNR) $=\text{Var}( X\beta^{0} + \tilde{X} \theta^{0}) / \sigma^2 = 10$.


\noindent {\bf Timings:}
We compare the running time of hierScale versus hierNet, glinternet, and SPAMS, on both synthetic and real datasets. For SPAMS, we use the same objective function of hierScale, and fit a regularization path (with warm starts) using FISTA. We note that SPAMS is not designed to solve interactions problems, so we had to generate the interaction columns apriori, which limits us to problems where the interactions can fit in memory. We generate the synthetic data under hierarchical truth with $n = 1000$ and $\| \beta^{0} \|_0 = \| \theta^{0} \|_0  = 5$ and consider $p$ up to $50,000$ ($\sim 10^9$ interactions). 
Among real data, we consider the Amazon Reviews dataset \citep{hazimeh2018fast} and use two variants of it: Amazon-1 ($p = 10160$, $n = 1000$) and Amazon-2 ($p = 5000$, $n = 1000$). We also consider the dataset from CoEPrA 2006 (Regression Problem 1) ($p = 5786$, $n=89$)\footnote{Available at \url{http://www.coepra.org}}, and the Riboflavin dataset ($p = 4088, n = 71$) \citep{buhlmann2014high}. For all toolkits, we set the tolerance level to $10^{-6}$, $\lambda_{\text{min}} = 0.05 \lambda_{\text{max}}$ and generate a path with 100 solutions. 
For hierScale and SPAMS, we set $\lambda_2 = 2 \lambda_1$. Computations are carried out on 
a machine with a 12-core Intel Xeon E5 @ 2.7 GHz and 64GB of RAM.

The results are in Table \ref{table:timings}. 
hierScale achieves over a $4900$x speed-up compared to hierNet, 700x speed-up compared to SPAMS, and $690$x speed-up compared to glinternet (e.g., on the Amazon dataset glinternet cannot terminate in a week).
We note that~\citet{she2018group} recently proposed an algorithm for 
a problem similar to ours, but do not provide a public toolkit---the largest problem reported in their paper is for $p=10^3$: In the best case, their method takes $\sim 51$ seconds per solution, whereas hierScale takes 0.2 seconds.

\begin{table*}[!htbp] 
\renewcommand{\tabcolsep}{1.3mm}
\centering
\caption{Average time (s) for obtaining a solution in the regularization path. The symbols * and ** indicate that the toolkit does not terminate in 3 days and 1 week, respectively. The dash (-) indicates a crash due to memory issues. The dot indicates that the data matrix could not fit in memory.}
\label{table:timings}
\begin{tabular}{@{}lcccccccccc@{}}
\toprule
\multicolumn{1}{l}{\multirow{2}{*}[-0.3cm]{Toolkit}} & \multicolumn{6}{c}{Synthetic datasets}              & \multicolumn{4}{c}{Real datasets}     \\ 
\cmidrule(l){2-7} 
\cmidrule(l){8-11} 
\multicolumn{1}{c}{}                         & p=500 & 1000 & 2000 & 5000 & 15000 & 50000 & Ribo. & Coepra & Amazon-1  & Amazon-2 \\ 
\multicolumn{1}{c}{}                         &   &   &   &   &   &   &  (p=4088) &  (p=5786)  & (p=5000) & (p=10160) \\ 
\midrule
hierScale                                    & 0.1 & 0.2  & 0.5  & 3.2  & 26.6   & 730    & 2.2 & 3.3 & 1.7   & 8.7    \\
glinternet                                   & 0.3 & 1    & 3.8  & 24.4 & 226.9  & *      & 2.7  & 7.8      & *    & **      \\
hierNet                                      & 490 & -   & -   & -   & -     & -     & -   & -      & -     & -     \\ 
SPAMS & 17.5 & 67.9 & 351.9 &  .   &   . & .     & 1042.3  & -   & .     & . \\
\bottomrule
\end{tabular}
\end{table*}

\noindent {\bf Variable Selection:}
We compare the False Discovery Rate (FDR) at different sparsity levels. We generate synthetic data with $n=100, p=200$, and $ \| \beta^{0} \|_0 = 10$ and report the FDR averaged over $100$ datasets, in Figure \ref{figure:FDR}. Under hierarchical truth, all the methods perform roughly similarly. However, under anti-hierarchy or main-only truth, our method can perform significantly better. 

\begin{figure*}[!htbp]
\centering
\includegraphics[scale=0.36]{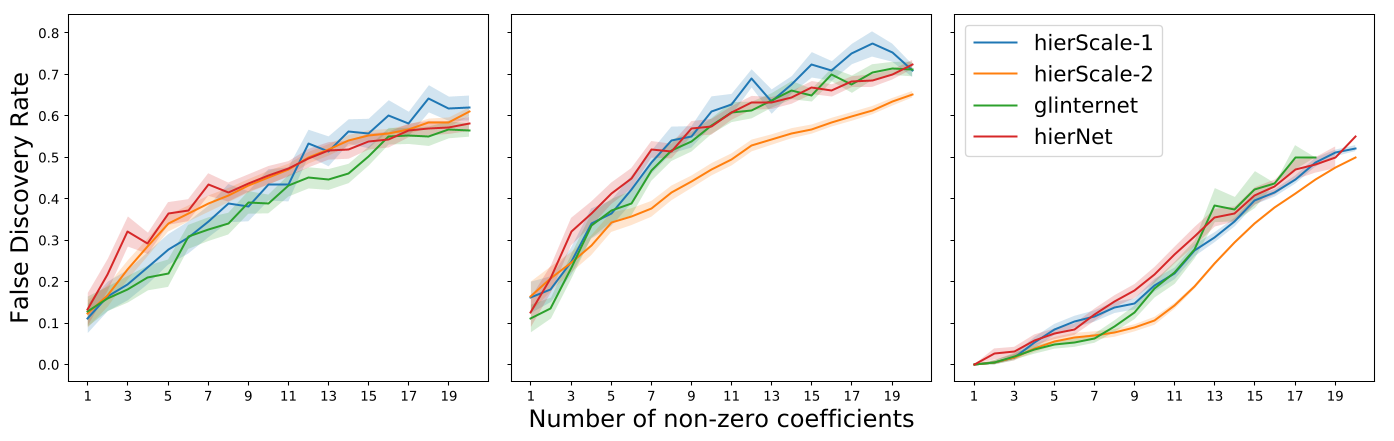} 

\caption{{False Discovery Rate (FDR) for different methods, under 3 settings (I) Hierarchical Truth, (II) Anti-Hierarchical Truth and (III) Main-Only Truth (left to right). hierScale-1 and hierScale-2 refer to our method with $\lambda_2 = \lambda_1$ and $\lambda_2 = 2 \lambda_1$, respectively. The shaded regions correspond to the standard error.  }}
\label{figure:FDR} 
\end{figure*}
\begin{figure*}[!htb]
\vspace{-0.5cm}
    \centering
    \subfloat{{\includegraphics[scale=0.29]{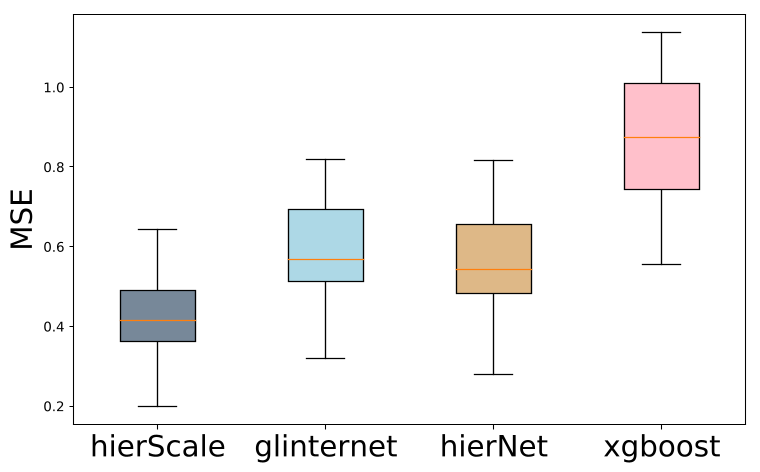}}}
    \quad
    \subfloat{\includegraphics[scale=0.31]{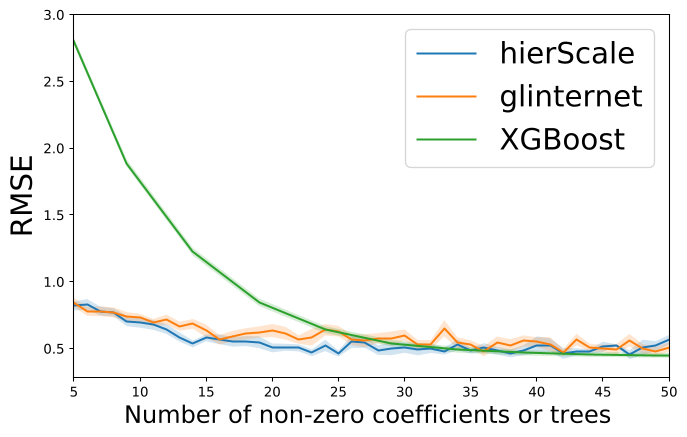}}
    \caption{\textbf{Left}: Test MSE on the synthetic data (Main-Only truth). \textbf{Right}: Test RMSE on the Riboflavin dataset ($n = 50$, $p=4088$). XGBoost is limited to depth 2. The shaded regions represent the standard error.}%
    \label{fig:pred}%
\end{figure*}

\noindent {\bf Prediction Tasks:}
We now compare the prediction performance of hierScale with the competing methods on both synthetic and real datasets. 

\noindent {\em Synthetic data}: We use the same dataset as for variable selection (above). We tune on a separate validation set and report the prediction MSE on the testing data (training, validation, and test sets are of the same size). For hierNet and glinternet, we tune over 50 parameter values. For hierScale we use 50 $\lambda_1$-values and $\lambda_2 \in \{ \lambda_1, 2\lambda_1 \}$. For XGBoost we use 50 tree-sizes (between $20$ and $1000$) and learning rates $\in \{ 0.01, 0.1 \}$. Results across 20 runs (for Main Only Truth) are in 
Figure~\ref{fig:pred}: Our method shows significant improvements in prediction accuracy. The results for hierarchical and anti-hierarchical truth are in the supplementary, with results across methods being comparable. 

\noindent {\em Real data}: We consider the Riboflavin dataset ($p = 4088$, $n=71$)~\citep{buhlmann2014high} for predicting Vitamin $B_2$ production from gene expression levels. We train on $50$ randomly chosen samples and compute the test RMSE on the remaining $21$ samples. To reduce the variability, we repeat this training/testing procedure 30 times and report the average RMSE. We plot the RMSE versus the sparsity level in Figure \ref{fig:pred}. 
For hierScale we vary $\lambda_2 \in \{ \lambda_1, 10 \lambda_1, 100 \lambda_1 \}$, and for XGBoost we vary the learning rate $\in \{0.01, 0.1, 1\}$. Figure~\ref{fig:pred} reports the best RMSE (across the different $\lambda_2$'s for hierScale the learning rates for XGBoost). We see that the SH methods (hierScale and glinternet) can be more effective than boosting at low/moderate sparsity levels (note $n=71$ is small), which is desirable for interpretable learning. On this dataset, hierScale shows marginal improvement over glinternet. \textcolor{black}{We note that \citet{lim2015learning}'s experiments also indicate that SH methods can be more effective than boosted trees in terms of FDR. }

\section{Conclusion and Future Work}
We studied the problem of learning sparse interactions under the strong hierarchy constraint. We introduced a transparent convex relaxation for the problem and developed a scalable algorithm based on proximal gradient descent. 
Our algorithm employs new screening rules for decomposing the proximal problem along with a specialized active-set method and gradient screening for mitigating costly gradient computations. Experiments on real and synthetic data show that our method achieves significant speed-ups over the state of the art and more robust variable selection. 

There are several promising directions for future work. For instance, our proximal and gradient screening methods can be extended to general group-sparsity problems with overlaps between the group norms, which  can potentially speed up the current solvers. 
Moreover, it would be interesting to establish bounds on the  sizes of the connected components and the number of variables eliminated by gradient screening.


\clearpage

\section*{Acknowledgments} We would like to thank Jacob Bien for pointing us to relevant references on hierarchical variable selection. Our work was partially supported by ONR-N000141512342, ONR-N000141812298 (YIP), and NSF-IIS1718258.

\bibliography{ref}

\appendix

\setcounter{table}{0}
\renewcommand{\thetable}{\thesection.\arabic{table}}
\setcounter{figure}{0}
\renewcommand\thefigure{\thesection.\arabic{figure}}
\setcounter{equation}{0}
\renewcommand\theequation{\thesection.\arabic{equation}}
\setcounter{theorem}{0}
\renewcommand\thetheorem{\thesection.\arabic{theorem}}

\onecolumn 
\section*{Appendices}
\section{Does the MIP enforce Strong Hierarchy?}
We note that there can be a pathological case where the MIP does not satisfy SH. We will briefly discuss why this case corresponds to a zero probability event, when $y$ is drawn from a continuous distribution (this happens for example, if $\epsilon \sim N(0, \sigma^2)$ with $\sigma>0$). First, we assume that $\alpha_1$ and $\alpha_2$ are chosen large enough so that the number of selected variables (as counted by $\sum z_i + \sum_{i<j} z_{ij}$) is less than or equal to the number of samples $n$. We will also assume that the $X_{i}$'s and $\tilde{X}_{ij}$'s corresponding to the nonzero $z_{i}$'s and $z_{ij}$'s have full rank.

A pathological case can happen when the optimal solution of the MIP satisfies: $z^{*}_{ij}=1$, $z^{*}_{i}=1$, $\beta^{*}_{i}=0$, and $\theta^{*}_{ij} \neq 0$ for some $i$ and $j$ (for example). However, in the latter case, $\beta_i$ is a free variable, i.e., $|\beta_{i}| \leq M$ (since $z^{*}_i = 1$ and we assume $M$ to be sufficiently large). 
Thus, $\beta^{*}_i = 0$ is equivalent to saying that a least squares solution 
on the support defined by the nonzero $z^*_{i}$'s and $z^*_{ij}$'s, leads to a coordinate $\beta_{i}^*$ which is exactly zero. We know that this is a zero probability event when $y$ is drawn from a continuous distribution (assuming the number of variables is less than or equal to  the number of samples and the corresponding columns have full rank).


\section{Proof of Lemma 1}
Suppose the $z_i$'s and $z_{ij}$'s are relaxed to $[0,1]$ and fix some feasible solution $\beta, \theta$. Let us (partially) minimize the objective function with respect to $z$, while keeping $\beta, \theta$ fixed, to obtain a solution $z^{*}$. Then, $z^{*}$ must satisfy $z^{*}_{i} = \max \{ \frac{|\beta_i|}{M}, \max_{k,j \in G_i} z_{kj} \}$ for every $i$ (since this choice is the smallest feasible $z_i$). Moreover, $z^{*}_{ij} = \frac{|\theta_{ij}|}{M}$ for every $i < j$ (by the same reasoning).  Substituting the optimal values $z^{*}_i$ and $z^{*}_{ij}$ leads to
\begin{align} \label{eq:relaxation_appendix}
\min_{\beta,\theta}~~~ f(\beta, \theta) + \Omega(\beta, \theta)  \quad  \text{s.t.} \ \| \beta, \theta \|_{\infty} \leq M
\end{align}
where
$
\Omega(\beta, \theta) \eqdef \lambda_1 \sum_{i=1}^{p} \max \{ |\beta_i|, \| \theta_{G_i} \|_{\infty} \} + \lambda_2 \| \theta \|_1$
and
$ \lambda_1 = {\alpha_1}/{M}, \lambda_2 = {\alpha_2}/{M}$. Finally, we note that the box constraint in the above formulation can be removed, and the resulting formulation is still a valid relaxation.

\section{Dual Reformulation of the Proximal Problem} \label{section:dual}
In this section, we present a dual reformulation of the proximal problem, which will facilitate solving the problem. We note that \citet{jenatton2011proximal} and \citet{mairal2011convex} dualize a proximal problem which involves sum of $\ell_{\infty}$ norms (their proximal problem thus includes ours as a special case). However, the dual we will present here uses $\Theta(p^2)$ less variables as it exploits the presence of the $\ell_1$ norm in the objective. First, we introduce some necessary notation. We define the soft-thresholding operator as follows:
\begin{align*}
    \mathcal{S}_{\gamma}(\tilde{v}) = 
        \begin{cases}
        0  & \text{ if } |\tilde{v}| \leq \gamma \\
       (|\tilde{v}| - \gamma) \sign(\tilde{v}) & \text{ o.w.}
        \end{cases}
\end{align*}
We associate every $\beta^{i}$ with a dual variable $u^{i} \in \mathbb{R}$, and every $\theta_{ij}$ with two dual variables: $w^{i}_{j} \in \mathbb{R}$ and $w^{j}_{i} \in \mathbb{R}$. Moreover, we use the notation $w^{i} \in \mathbb{R}^{p-1}$ to refer to the vector composed of $w^{i}_{j}$ for all $j$ such that $j \neq i$.

\begin{theorem} {(Dual formulation)} \label{theorem:dual}
A dual of the proximal problem is:
\begin{align} \label{eq:dualproblem}
    \max_{u,w}~~q(u,w) ~~~ \text{s.t.} ~~~~~ \| (u^i, w^i) \|_1 \leq 1 \quad \forall \ i \in [p]
\end{align}
where $q(u,w)$ is a continuously differentiable function with a Lipschitz continuous gradient, and
$$\nabla_{u^i} q(u,w) = \lambda_1 \Big(  \tilde{\beta}_i - \frac{\lambda_1}{L} u^i \Big)$$   $$\nabla_{w^i_j} q(u,w) = \nabla_{w^j_i} q(u,w) = \lambda_1  \mathcal{S}_{\tfrac{\lambda_2}{L}} \big(\tilde{\theta}_{ij} - \frac{\lambda_1}{L} (w^i_j + w^j_i) \big).$$
    
If $u^{*}, w^{*}$ is a solution to~\eqref{eq:dualproblem}, then the solution to the proximal problem is:
\begin{align} \label{eq:dualtoprimal}
\beta^{*}_i = {\nabla_{u^i} q(u^{*},w^{*}) \over \lambda_1} \ \text{ and } \  \theta^{*}_{ij} = { \nabla_{w^i_j} q(u^{*},w^{*}) \over \lambda_1}.
\end{align}
\end{theorem}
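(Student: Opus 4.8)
The plan is to dualize \eqref{eq:proximalupdate} by introducing, for each group $i$, a single dual vector $(u^i, w^i)$ conjugate to the $\ell_\infty$ norm $\max\{|\beta_i|, \|\theta_{G_i}\|_\infty\} = \|(\beta_i, \theta_{G_i})\|_\infty$, while deliberately keeping the separable term $\lambda_2 \|\theta\|_1$ in the primal so that it can later be absorbed into a soft-thresholding step. Using that $\ell_1$ and $\ell_\infty$ are dual norms, I would substitute the variational representation
\[
\lambda_1 \max\{|\beta_i|, \|\theta_{G_i}\|_\infty\} = \max_{\|(u^i, w^i)\|_1 \leq 1} \lambda_1 \Big( u^i \beta_i + \sum_{j \neq i} w^i_j \theta_{ij} \Big)
\]
into $\Omega$. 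Summing over $i$ converts the objective of \eqref{eq:proximalupdate} into a min-max problem whose dual-feasible set is exactly the product of $\ell_1$ balls $\{\|(u^i, w^i)\|_1 \leq 1\}$ appearing in \eqref{eq:dualproblem}.

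The first delicate step is bookkeeping: each interaction index $(i,j)$ with $i<j$ belongs to both $G_i$ and $G_j$, so it collects two dual contributions, and its linear coefficient in the substituted objective becomes $\lambda_1(w^i_j + w^j_i)$. This coupling is precisely what forces $\nabla_{w^i_j} q = \nabla_{w^j_i} q$ at the end. Next I would interchange the $\min$ over $(\beta, \theta)$ with the $\max$ over the duals. This is justified by Sion's minimax theorem: for fixed duals the inner function is strongly convex (hence coercive) in $(\beta, \theta)$, while it is linear---thus concave and upper semicontinuous---in $(u,w)$ over the compact convex feasible set, so strong duality holds and the order may be swapped.

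After the swap, the inner minimization separates coordinatewise and is solved in closed form. The terms in $\beta_i$ form a quadratic-plus-linear expression minimized at the affine shift $\beta_i^{*} = \tilde\beta_i - (\lambda_1/L) u^i$. For each $\theta_{ij}$, completing the square turns the quadratic-plus-linear-plus-$\lambda_2 |\cdot|$ term into $\min_{\theta_{ij}} \tfrac{L}{2}(\theta_{ij} - a_{ij})^2 + \lambda_2 |\theta_{ij}|$ with $a_{ij} = \tilde\theta_{ij} - (\lambda_1/L)(w^i_j + w^j_i)$, a scalar soft-thresholding problem solved by $\theta_{ij}^{*} = \mathcal{S}_{\lambda_2/L}(a_{ij})$; this is exactly where retaining $\ell_1$ in the primal eliminates the $\Theta(p^2)$ extra dual variables required by a direct epigraph treatment. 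Substituting these minimizers back defines $q(u,w)$, and its gradients follow from the envelope (Danskin) theorem: since the inner minimizer is unique by strong convexity, $\nabla_{u^i} q = \lambda_1 \beta_i^{*}$ and $\nabla_{w^i_j} q = \lambda_1 \theta_{ij}^{*}$, which match the stated formulas and immediately give the primal recovery \eqref{eq:dualtoprimal}. Continuous differentiability with a Lipschitz gradient then follows because $\beta_i^{*}$ is affine in $u$ and $\mathcal{S}_{\lambda_2/L}$ is $1$-Lipschitz, so $\nabla q$ is a composition of affine maps with a Lipschitz operator.

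The main obstacle I anticipate is not any single computation but combining two things cleanly: the correct accounting of the shared interaction variables (the $w^i_j + w^j_i$ coupling across groups $G_i$ and $G_j$) and a rigorous justification of the min-max interchange over an unbounded primal domain. Once those are settled, the inner minimization is routine soft-thresholding and the gradient identities are a direct application of the envelope theorem.
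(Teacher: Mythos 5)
Your proposal is correct and follows essentially the same route as the paper's proof: the $\ell_1$--$\ell_\infty$ dual-norm substitution with the $\lambda_2\|\theta\|_1$ term kept in the primal, the $w^i_j + w^j_i$ accounting for shared interaction variables, the min--max interchange by strong duality, closed-form inner minimization via soft-thresholding, and Danskin's theorem for the gradient identities. Your explicit invocation of Sion's theorem (compactness on the dual side) and the brief Lipschitz-gradient argument are slightly more detailed than the paper's, but the substance is identical.
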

\begin{proof}
Since the $\ell_1$ norm is the dual of the $l_\infty$ norm, we have:
\begin{align}\max \{ |\beta_i|, \| \theta_{G_i} \|_{\infty} \} = \max_{u^i \in \mathbb{R}, \  w^i \in \mathbb{R}^{p-1}} u^i \beta_i + \langle w^i, \theta_{G_i} \rangle  \quad \text{s.t.} \| (u^i, w^i) \|_1 \leq 1 \end{align}
Plugging the above into the proximal problem and switching the order of the min and max (which is justified by strong duality), we arrive to the dual of the proximal problem:
\begin{align} \label{eq:initialdual}
\begin{split}
   \max_{u, w} \min_{\beta,\theta} & \frac{L}{2}  \Bigg \| \begin{bmatrix} \beta - \tilde{\beta} \\ \theta - \tilde{\theta} \end{bmatrix} \Bigg \|_2^2  + \lambda_1 \sum_i (u^i \beta_i + \langle w^i,\theta_{G_i} \rangle ) + \lambda_2 \| \theta \|_1 \\
& \text{s.t. }  \| (u^i, w^i) \|_1 \leq 1, \forall i \in [p]
\end{split}
\end{align}
Note that each dual variable $u^i$ is a scalar which corresponds to the primal variable $\beta^i$. Similarly, the dual vector $w^i \in \mathbb{R}^{p-1}$ corresponds to $\theta_{G_i}$. The term $\sum_i (u^i \beta_i + \langle w^i,\theta_{G_i} \rangle )$ in \eqref{eq:initialdual} can be written as $\sum_i u^i \beta_i + \sum_{i < j} \theta_{ij} (w^i_j + w^j_i)$, where $w^i_j$ and $w^j_i$ are the components of the vectors $w^i$ and $w^j$, respectively, corresponding to $\theta_{ij}$. Using this notation, we can rewrite Problem \eqref{eq:initialdual} as follows:
\begin{align} \label{eq:maxmin}
\begin{split}
   \max_{u, w} \min_{\beta,\theta} & \sum_i h(\beta_i, u^i ;\tilde{\beta_i}) + \sum_{i < j} g(\theta_{ij}, w^i_j, w^j_i; \tilde{\theta}_{ij} ) \\
& \text{s.t. } \| (u^i, w^i) \|_1 \leq 1, \forall i \in [p]
\end{split}
\end{align}
where 
$$
h(a,b;\tilde{a}) \eqdef \frac{L}{2} (a - \tilde{a})^2 + \lambda_1 a b  \quad \text{and} \quad g(a,b,c;\tilde{a}) \eqdef \frac{L}{2} (a - \tilde{a})^2 + \lambda_1 a (b+c) + \lambda_2 |a|
$$
The optimal solution of the inner minimization in \eqref{eq:maxmin} is (uniquely) given by:
\begin{align} \label{eq:betathetastar}
\begin{split}
    \beta_i^{*} & \eqdef  \argmin_{\beta_i} h (\beta_i, u^i ;\tilde{\beta_i}) =  \tilde{\beta}_i - \frac{\lambda_1}{L} u^i \\
\theta_{ij}^{*} & \eqdef  \argmin_{\theta_{ij}} g(\theta_{ij}, w^i_j , w^j_i; \tilde{\theta}_{ij} ) = \mathcal{S}_{\frac{\lambda_2}{L}} \Big(\tilde{\theta}_{ij} - \frac{\lambda_1}{L} (w^i_j + w^j_i) \Big) 
\end{split}
\end{align}
Therefore, the dual problem can equivalently written as:
\begin{align} \label{eq:dualprox}
\begin{split}
    \max_{u,w} & \underbrace{\sum_i h(\beta_i^{*}, u^i ;\tilde{\beta_i}) + \sum_{i < j} g(\theta_{ij}^{*}, w^i_j , w^j_i; \tilde{\theta}_{ij} )}_{q(u,w)}  \quad \text{s.t.} \ \| (u^i, w^i) \|_1 \leq 1 \quad \forall \ i
\end{split}
\end{align}

Finally, since the solution $\beta^{*}, \theta^{*}$ is defined in \eqref{eq:betathetastar} is unique, Danskin's theorem implies that the dual objective function $q(u,w)$ is continuously differentiable and that 
\begin{align} \label{eq:grads}
    \nabla_{u^i} q(u,w) = \lambda_1 \beta_i^{*} \quad \text{and} \quad \nabla_{w^i_j} q(u,w) = \nabla_{w^j_i} q(u,w) = \lambda_1 \theta_{ij}^{*}.
\end{align}
\end{proof}

In problem \eqref{eq:dualproblem}, the separability of the feasible set across the $(u^i, w^i)$'s and the smoothness of $q(u,w)$ make the problem well-suited for the application of block coordinate ascent (BCA) (\cite{bertsekas2016nonlinear, Tseng2001}), which optimizes with respect to a single block at a time. When updating a particular block in BCA, we perform inexact maximization by taking a step in the direction of the gradient of the block and projecting the resultant vector onto the feasible set, i.e., the $\ell_1$ ball. We present the algorithm more formally below.
\smallskip
\begin{tcolorbox}[boxsep=1pt,left=4pt,right=4pt]
\centering
{\bf Algorithm 3: BCA for Solving \eqref{eq:dualproblem}}
\begin{itemize}[leftmargin=*]
\item Initialize with $u, w$ and take step size $\alpha_i, i \in [p]$.
\item For $i \in [p]$ perform updates (till convergence):
\begin{equation*}
\begin{bmatrix} u^i \\ w^i \end{bmatrix} \gets \mathcal{P}_{\| . \|_1 \leq 1} \Bigg(\begin{bmatrix} u^i \\ w^i \end{bmatrix} + \alpha_i \nabla_{u^i, w^i} q(u,w) \Bigg),
\end{equation*}
where for a vector $a$, $\mathcal{P}_{\| . \|_1 \leq 1}(a)$ denotes projection of $a$ onto the 
unit $\ell_{1}$-ball.
\end{itemize}
\end{tcolorbox}
The Lipschitz parameter of $\nabla_{u^i, w^i} q(u,w)$ is given by $L_i = p \frac{\lambda_1^2}{L}$ (this follows by observing that each component of $\nabla_{u^i, w^i} q(u,w)$ is a piece-wise linear function with a maximal slope of $\frac{\lambda_1^2}{L}$). Thus, by standard results on block coordinate descent (e.g., \citet{BeckConvergence, bertsekas2016nonlinear}), Algorithm 3 with step size $\alpha_i = \frac{1}{L_i}$ converges at a rate of $\mathcal{O}(\frac{1}{t})$ (where $t$ is the iteration counter). We note that BCA has been applied to the dual of structured sparsity problems (e.g., \cite{jenatton2011proximal, yan2017hierarchical})---however, the duals considered in the latter works are different.

\section{Proof of Theorem 1}
We prove the theorem using the dual reformulation presented in Theorem \eqref{theorem:dual} and the block coordinate ascent (BCA) algorithm presented in Section \ref{section:dual}. Suppose $\sum_{t \in G_i} \max \{ | \tilde{\theta}_{t} | - \frac{\lambda_2}{L}, 0\} \leq \frac{\lambda_1}{L} - | \tilde{\beta}_i |$ is satisfied for some $i$. Let $u, w$ be some feasible solution to Problem \eqref{eq:dualproblem}  (e.g., solution of all zeros). Now update $u, w$ as follows:
\begin{align} \label{eq:choice1}
    u^i = \frac{L}{\lambda_1} \tilde{\beta}_i,
\end{align}
and for every $t \in G_i$, let $j$ be the index in $t$ different from $i$ and set:
\begin{align}\label{eq:choice2}
    w^{i}_{j} =  \max \Big \{\frac{L}{\lambda_1} |\tilde{\theta}_{t}| - \frac{\lambda_2}{\lambda_1}, 0 \Big \} \sign(\tilde{\theta}_{t}) \quad \text{ and } \quad w^{j}_{i} = 0
\end{align}
It is easy to check that $u, w$ is still feasible for Problem \eqref{eq:dualproblem} after this update and that $\nabla_{u^i, w^i} q(u,w) = 0$ and $ \nabla_{w^j_i} q(u,w) = 0$ for every $j$. Thus, BCA will never change $u^i$, $w^i$, or $w^j_i$ (for any $j$) in subsequent iterations. Since BCA is guaranteed to converge to an optimal solution, we conclude that the values in \eqref{eq:choice1} and \eqref{eq:choice2} (which correspond to $\beta^{*}_i, \theta^{*}_{G_i} = 0$) are optimal.

For the case when $|\tilde{\theta}_{ij}| \leq \frac{\lambda_2}{L}$, if we set $w^{i}_{j} = w^{j}_{i} = 0$, then  $\nabla_{w^i_j} q(u,w) = \nabla_{w^j_i} q(u,w) = 0$, so BCA will never change  $w^{i}_{j}$ or $ w^{j}_{i}$, which leads to $\theta^{*}_{ij} = 0$.

\section{Proof of Theorem 2}
By Theorem 1, we have $\beta_{\mathcal{V}^c}^{*} = 0,~~\theta_{\mathcal{E}^c}^{*} = 0$. Plugging the latter into the proximal problem leads to:
\begin{align} 
\begin{bmatrix} \beta^{*}_{\mathcal{V}} \\ \theta^{*}_{\mathcal{E}} \end{bmatrix} = \argmin_{ \beta_{\mathcal{V}}, \theta_{\mathcal{E}}} \frac{L}{2} \Bigg \| \begin{bmatrix} \beta_{\mathcal{V}} - \tilde{\beta}_{\mathcal{V}} \\ \theta_{\mathcal{E}} - \tilde{\theta}_{\mathcal{E}} \end{bmatrix} \Bigg \|_2^2 +  \Omega(\beta_{\mathcal{V}}, \theta_{\mathcal{E}})
\end{align}
where $\Omega(\beta_{\mathcal{V}}, \theta_{\mathcal{E}}) = \lambda_1 \sum_{i \in \mathcal{V}} \max \{ |\beta_i|, \| \theta_{\tilde{G_i}} \|_{\infty} \} + \lambda_2 \| \theta_{\mathcal{E}} \|_1$ and $\tilde{G_i} = G_i \setminus \mathcal{E}^c$. But by the definition of the connected components, the following holds:
$$\Omega(\beta_{\mathcal{V}}, \theta_{\mathcal{E}}) = \sum_{l \in [k]} \Big[ \lambda_1 \sum_{i \in \mathcal{V}_l} \max \{ |\beta_i|, \| \theta_{\tilde{G_i}} \|_{\infty} \} + \lambda_2 \| \theta_{\mathcal{E}_l} \|_1  \Big] = \sum_{l \in [k]} \Omega(\beta_{\mathcal{V}_l}, \theta_{\mathcal{E}_l}).$$
The above implies that the proximal problem is separable across the blocks $\beta_{\mathcal{V}_l}, \theta_{\mathcal{E}_l}$, which leads to the result of the theorem.

\section{Proof of Lemma 2}
First, note that the full gradient $\nabla_{\beta,\theta} f(\hat{\beta},\hat{\theta})$ is sufficient for constructing $\mathcal{G}$ (see steps 2 and 3 of Algorithm 2). The $(i,j)$'s in $\mathcal{T}^c$ whose $|\tilde{\theta}_{ij}| \leq \lambda_2/L$ are not needed to construct $\mathcal{G}$ (this follows from the definitions of $\mathcal{V}$ and $\mathcal{E}$). For every $(i,j) \in \mathcal{T}^c$, we have $\hat{\theta}_{ij} = 0$, so the  condition $|\tilde{\theta}_{ij}| \leq \lambda_2/L$ is equivalent to $|\nabla_{\theta_{ij}} f(\hat{\beta}, \hat{\theta})| \leq \lambda_2$ (see the definition of $\tilde{\theta}_{ij}$ in step 2 of Algorithm 2). Thus, the $(i,j)$'s in $\mathcal{T}^c$ with $|\nabla_{\theta_{ij}} f(\hat{\beta}, \hat{\theta})| \leq \lambda_2$ are not needed to construct $\mathcal{G}$. The latter indices are exactly those in $\mathcal{S}^c$. Thus, the remaining parts of the gradient are: $\nabla_{\beta} f(\hat{\beta}, \hat{\theta})$, $\nabla_{\theta_{\mathcal{T}}} f(\hat{\beta}, \hat{\theta})$, and $\nabla_{\theta_{\mathcal{S}}} f(\hat{\beta}, \hat{\theta})$---these are sufficient to construct $\mathcal{G}$.

\section{Proof of Lemma 3}
Let $(i,j) \in \mathcal{S}$. By the triangle inequality:
\begin{align} \label{eq:screenproofinitial}
     |\nabla_{\theta_{ij}} f(\hat{\beta}, \hat{\theta})| & \leq  |\nabla_{\theta_{ij}} f(\beta^{w}, \theta^{w})  | + |  \nabla_{\theta_{ij}} f(\hat{\beta}, \hat{\theta}) - \nabla_{\theta_{ij}} f(\beta^{w}, \theta^{w})   |
\end{align} 
Writing down the gradients explicitly and using Cauchy-Schwarz, we get: \begin{align*}
| \nabla_{\theta_{ij}} f(\hat{\beta}, \hat{\theta}) - \nabla_{\theta_{ij}} f(\beta^{w}, \theta^{w})| & = | \tilde{X}_{ij}^T (y - X \hat{\beta} - \tilde{X} \hat{\theta}) - \tilde{X}_{ij}^T (y - X \beta^{w} - \tilde{X} \theta^{w}) | \\
& \leq \| \tilde{X}_{ij} \|_2 \| (X \hat{\beta} + \tilde{X} \hat{\theta}) - (X \beta^{w} + \tilde{X} \theta^{w}) \|_2 \\
& \leq C \| \gamma \|_2
\end{align*}
Plugging the upper bound above into \eqref{eq:screenproofinitial}, we get $|\nabla_{\theta_{ij}} f(\hat{\beta}, \hat{\theta})| \leq |\nabla_{\theta_{ij}} f(\beta^{w}, \theta^{w})  | +  C \| \gamma \|_2$. Therefore, if $(i,j) \in \mathcal{S}$, i.e., $|\nabla_{\theta_{ij}} f(\hat{\beta}, \hat{\theta})| > \lambda_2$ then $|\nabla_{\theta_{ij}} f(\beta^{w}, \theta^{w})  | + C \| \gamma \|_2 > \lambda_2$, implying that $(i,j) \in \hat{\mathcal{S}}$.

\section{Results of Additional Experiments}

\subsection{Sizes of Connected Components}
For the Riboflavin ($p = 4088$, $n=71$) and Coepra ($p = 5786$, $n=89$) datasets (discussed in the paper), we fit a regularization path with 100 solutions using Algorithm 2. In Table \ref{table:sizes}, we report the maximum number of edges and vertices encountered across all the connected components and for all the 100 solutions in the path.
\begin{table}[h]
\centering
\caption{Maximum size of the connected components across a regularization path of 100 solutions.}
\label{table:sizes}
\begin{tabular}{|l|cc|cc|}
\hline
\multirow{2}{*}{Dataset} & \multicolumn{2}{c|}{$\lambda_2 = 2 \lambda_1$} & \multicolumn{2}{c|}{$\lambda_2 = \lambda_1$} \\ \cline{2-5} 
                         & \# Edges             & \# Vertices             & \# Edges            & \# Vertices            \\ \hline
Ribo                     & 350                  & 149                     & 1855                & 693                    \\
Coepra                   & 227                  & 86                      & 400                 & 103                    \\ \hline
\end{tabular}
\end{table}

\subsection{Prediction Error}
\begin{figure}[H]
\centering
 \includegraphics[scale=0.5]{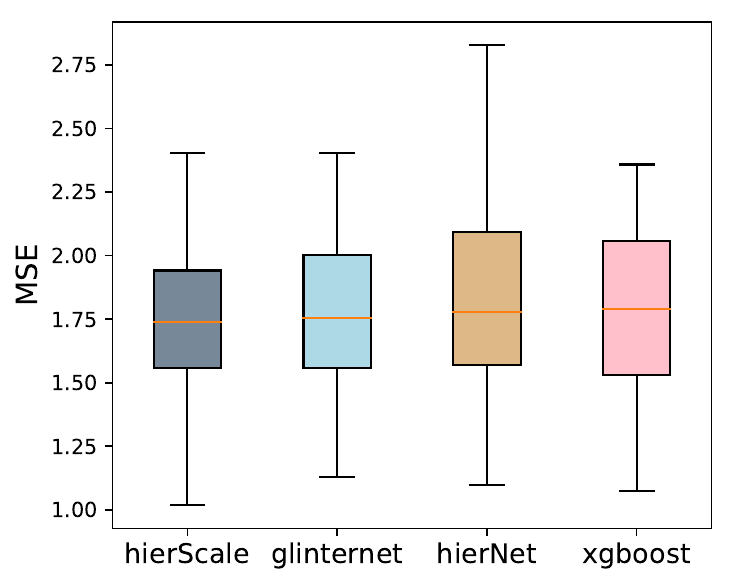}
  \caption{MSE on the test set for synthetic data (Anti-Hierarchical truth).}
\end{figure}

\begin{figure}[H]
\centering
 \includegraphics[scale=0.5]{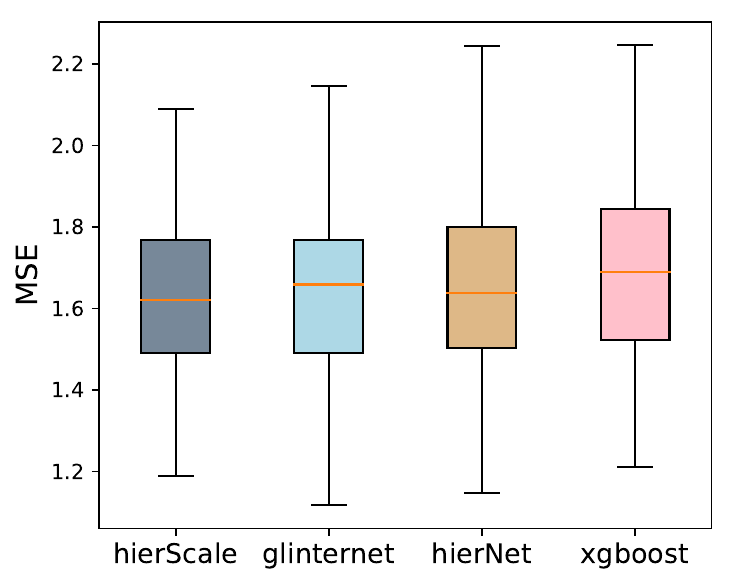}
  \caption{MSE on the test set for synthetic data (Hierarchical truth).}
\end{figure}

\end{document}